\newtheorem{lemma}{Lemma}
\newtheorem{claim}{Claim}
\newtheorem{theorem}{Theorem}
\title{Parameterized Compilation Lower Bounds for Restricted CNF-formulas}
\author{Stefan Mengel\\ CNRS, CRIL UMR 8288, France}
\begin{document}

\maketitle

\begin{abstract}
We show unconditional parameterized lower bounds in the area of knowledge compilation, more specifically on the size of circuits in decomposable negation normal form (DNNF) that encode CNF-formulas restricted by several graph width measures. In particular, we show that
\begin{itemize}
\item there are CNF formulas of size $n$ and modular incidence treewidth $k$ whose smallest DNNF-encoding has size $n^{\Omega(k)}$, and
\item there are CNF formulas of size $n$ and incidence neighborhood diversity $k$ whose smallest DNNF-encoding has size $n^{\Omega(\sqrt{k})}$.
\end{itemize}
These results complement recent upper bounds for compiling CNF into DNNF and strengthen---quantitatively and qualitatively---known conditional low\-er bounds for cliquewidth. Moreover, they show that, unlike for many graph problems, the parameters considered here behave significantly differently from treewidth. 
\end{abstract}

\section{Introduction}

Knowledge compilation is a preprocessing regime that aims to translate or ``compile'' knowledge bases, generally encoded as CNF formulas, into different representations more convenient for a task at hand. The idea is that many queries one would like to answer on the knowledge base, say clause entailment queries, are intractable in CNF encoding, but tractable for other representations. When there are many queries on the same knowledge base, as for example in product configuration, it makes sense to invest into a costly preprocessing to change the representation once in order to then speed up the queries and thus amortize the time spent on the preprocessing.

One critical question when following this approach is the choice of the representation that the knowledge is encoded into. In general, there is a trade-off between the usefulness of a representation (which queries does it support efficiently?) and succinctness (what is the size of the encoded knowledge base?). This trade-off has been studied systematically~\cite{DarwicheM02}, leading to a fine understanding of the different representations. In particular, circuits in decomposable negation normal form (short DNNF)~\cite{Darwiche01} have been identified as a representation that is more succinct than nearly all other representations while still allowing useful queries. Consequently, DNNFs play a central role in knowledge compilation.

This paper should be seen as complementing the findings of~\cite{BovaCMS15}: In that paper, algorithms compiling CNF formulas with restricted underlying graph structure were presented, showing that popular graph width measures like treewidth and cliquewidth can be used in knowledge compilation. More specifically, every CNF formula of incidence \emph{treewidth} $k$ and size $n$ can be compiled into a DNNF of size $2^{O(k)} n$. Moreover, if $k$ is the incidence \emph{cliquewidth}, the size bound on the encoding becomes $n^{O(k)}$. As has long been observed, $2^{O(k)} n$ is of course far preferable to $n^{O(k)}$ for nontrivial sizes of $n$---in fact, this is the main premise of the field of parameterized complexity theory, see e.g.~\cite{FlumG06}. Consequently, the results of~\cite{BovaCMS15} leave open the question if the algorithm for clique-width based compilation of CNF formulas can be improved. 

In fact, the paper~\cite{BovaCMS15} already gives a partial answer to this question, proving that there is no compilation algorithm achieving fixed-parameter compilability, i.e., a size bound of $f(k) p(|F|)$ for a function $f$ and a polynomial $p$. But unfortunately this result is based on the plausible but rather non-standard complexity assumption that not all problem in $\mathsf{W}[1]$ have $\mathsf{FPT}$-size circuits. The result of this paper is that this assumption is not necessary. We prove a lower bound of $|F|^{\Omega(k)}$ for formulas of \emph{modular incidence treewidth} $k$ where modular treewidth is a restriction of cliquewidth proposed in~\cite{PaulusmaSS13}. It follows that the result in~\cite{BovaCMS15} is essentially tight. Moreover, we show a lower bound of $|F|^{\Omega(\sqrt{k})}$ for formulas of neighborhood diversity~$k$~\cite{Lampis10}. This intuitively shows that all graph width measures that are stable under adding modules, i.e., adding a new vertex that has exactly the same neighborhood as an existing vertex, behave qualitatively worse than treewidth for compilation into DNNFs.

\paragraph{Related work.}

Parameterized knowledge compilation was first introduced by Chen~\cite{Chen05} and has seen some recent renewed interest, see e.g.~\cite{Chen15,Haan15} for work on conditional lower bounds. Unconditional lower bounds based on treewidth can e.g.~be found in \cite{Razgon14,Razgon14b}, but they are only for different versions of branching programs that are known to be less succinct than DNNF. Moreover, these lower bounds fail for DNNFs as witnessed by the upper bounds of~\cite{BovaCMS15}.

There is a long line of research using graph and hypergraph width measures for problems related to propositional satisfiability, see e.g.~the extensive discussion in~\cite{Brault-BaronCM15}. The paper~\cite{OrdyniakPS13} gave the first parameterized lower bounds on SAT with respect to graph width measures, in particular cliquewidth. This result was later improved to modular treewidth to complement an upper bound for model counting~\cite{PaulusmaSS13} and very recently to neighborhood diversity~\cite{Dell0LMM15}, a width measure introduced in~\cite{Lampis10}. We remark that the latter result could be turned into a conditional parameterized lower bound similar to that in~\cite{BovaCMS15} discussed above.

Our lower bounds strongly rely on the framework for DNNF lower bounds proposed in \cite{IJCAIversion} and communication theory lower bounds from~\cite{DurisHJSS04}, for more details see Section~\ref{sct:statement}.

\section{Preliminaries}

% \paragraph{Coding theory.}

In the scope of this paper, a \emph{linear code} $C$ is the solution of a system of linear equations $A\bar x = 0$ over the boolean field $\mathbb{F}_2$. The matrix $A$ is called the \emph{parity-check matrix} of $C$. The characteristic function $f_C$ is the boolean function that, given a boolean string $e$, evaluates to~$1$ if and only if $e$ is in $C$.
% We call the parity-check matrix $H$ of a linear code $s$-good

We use the notation $[n]:=\{1, \ldots, n\}$ and $[n_1, n_2]:=\{ n_1, n_1+1, \ldots, n_2\}$ to denote integer intervals.
We use standard notations from graph theory and assume the reader to have a basic background in in the area~\cite{Diestel12}. By $N(v)$ we denote the open neighborhood of a vertex in a graph.

We say that two vertices $u$, $v$ in a graph $G=(V,E)$ have the same neighborhood type if and only if $N(u)\setminus \{v\} = N(v)\setminus \{u\}$. It can be shown that having the same neighborhood type is an equivalence relation on $V$~\cite{Lampis10}. The neighborhood diversity of $G$ is defined to be the number of equivalence classes of $V$ with respect to neighborhood types.

A generalization of neighborhood diversity is \emph{modular treewidth} which is defined as follows: From a graph $G$ we construct a new graph $G'$ by contracting all vertices sharing a neighborhood type, i.e., from every equivalence class we delete all vertices but one. The modular treewidth of $G$ is then defined to be the treewidth of $G'$\footnote{Note that the definition in~\cite{PaulusmaSS13} differs from the one we give here, but can easily be seen to be equivalent for bipartite graphs and thus incidence graphs of CNF formulas. We keep our definition to be more consistent with the definition of neighborhood diversity.}. Modular pathwith is defined in the obvious analogous way.

% \todo{pathwidth}

% \paragraph{CNF.} 
We assume basic familiarity with propositional logic and in particular CNF formulas. We define the size of a CNF formula to be the overall number of occurrences of literals, i.e., the sum of the sizes of the clauses where the size of a clause is the number of literals in the clause. The incidence graph of a CNF formula $F$ has as vertices the variables and clauses of $F$ and an edge between every clause and the vertices contained in it. The projection of an assignment $a:X\rightarrow \{0,1\}$ to a set $Z$ is the restriction of $a$ to the variable set $Z$. This definition generalizes to sets of assignments in the obvious way. Moreover, the projection of a boolean function $f$ on $X$ to $Z$ is defined as the boolean function on $Z$ whose satisfying assignments are those of $f$ projected to $Z$. For the width measures introduced above, we define the with of a formula to be that of its incidence graph.

\section{Statement of the Main Results and Preparation of the Proof}\label{sct:statement}

We now state our main results. The first theorem shows that modular pathwith---and thus also more general parameters like cliquewidth and modular treewidth---do not allow fixed-parameter compilation to DNNF.

\begin{theorem}\label{thm:maintw}
For every $k$ and for every $n$ big enough there is a CNF formula $F$ of size at most $n$ and modular pathwidth $k$ such that any DNNF computing the same function as $F$ must have size $n^{\Omega(k)}$. 
\end{theorem}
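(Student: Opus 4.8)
The plan is to build on known communication-complexity lower bounds for DNNF. Recall from~\cite{IJCAIversion} that if a boolean function $f$ on variable set $X$ is computed by a DNNF of size $s$, then for \emph{every} partition of $X$ into two halves $(X_1,X_2)$ there is a so-called ``rectangle cover'' of the satisfying assignments of $f$ by at most $s$ combinatorial rectangles with respect to $(X_1,X_2)$; equivalently, the nondeterministic multi-partition communication complexity of $f$ is at most $\log s$. Thus to get an $n^{\Omega(k)}$ lower bound on DNNF size it suffices to exhibit, for each $k$, a CNF formula $F_k$ of size $n$ whose function has multi-partition communication complexity $\Omega(k\log n)$, and which simultaneously has modular pathwidth $O(k)$.

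The source of hardness will be linear codes, exactly as the mention of $f_C$ and of~\cite{DurisHJSS04} in the preliminaries suggests. By~\cite{DurisHJSS04} there are (families of) linear codes whose characteristic function $f_C$ has large \emph{best-partition} communication complexity---linear in the blocklength---and this can be boosted: taking a disjoint ``sum'' of $k$ independent copies $C^{(1)},\dots,C^{(k)}$ of such a code on disjoint variable blocks, any partition of the whole variable set must split at least one block in a balanced way (a routine counting/pigeonhole argument on how the $k$ blocks distribute across the two sides), so the multi-partition communication complexity of $f_{C^{(1)}}\wedge\cdots\wedge f_{C^{(k)}}$ is $\Omega(n_0)$ where $n_0$ is the blocklength of a single copy. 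Choosing $n_0=\Theta(n/k)$ and tuning parameters, the bound becomes $\Omega(n/k)$, and since each parity check has bounded size relative to $n$ one gets the desired $n^{\Omega(k)}$ lower bound on DNNF size after rewriting in terms of $n$; I would set the blocklength so that the exponent works out to $\Omega(k)$ rather than $\Omega(1)$, i.e.\ scale each copy to size roughly $n^{1/k}$ so $\log s \ge \Omega(k\log n)$.

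The main obstacle---and the real content of the theorem---is the structural side: showing the chosen $F_k$ has modular pathwidth $O(k)$. Here the point is to choose each code $C^{(i)}$ so that its parity-check matrix, encoded as a CNF via fresh auxiliary XOR-variables (a parity constraint on $m$ variables is a CNF of size $O(m)$ after introducing a path of $O(m)$ auxiliary variables, or directly of size $2^{m-1}$ if $m$ is bounded), has an incidence graph that, \emph{after contracting modules}, has bounded pathwidth. The natural candidate is to use a code each of whose parity checks involves essentially all blocklength variables in a symmetric way---so that the variable-vertices of one copy split into a bounded number of neighborhood-type classes---while the clause-vertices likewise collapse; then $G'$ for the whole formula is a disjoint-ish union of $k$ bounded-size gadgets, giving modular pathwidth $O(k)$. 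Concretely I expect to use a code like the parity code or a Reed--Muller-type code whose incidence structure is highly regular, verify that contracting neighborhood-type classes leaves a graph of constant size per copy, and conclude pathwidth $O(k)$ for the union.

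To summarize the key steps in order: (1) recall the rectangle/communication lower bound for DNNF from~\cite{IJCAIversion}; (2) recall from~\cite{DurisHJSS04} a linear code with large best-partition communication complexity, and a disjoint-union amplification lemma boosting best-partition to multi-partition complexity $\Omega(\text{blocklength})$ for $k$ copies; (3) encode $k$ scaled copies of this code as a CNF $F_k$ of size $n$, choosing the blocklength of each copy as $\Theta(n^{1/k})$; (4) prove $F_k$ has modular pathwidth $O(k)$ by checking that contracting neighborhood-type classes collapses each copy to a constant-size graph; (5) combine (1)--(4): any DNNF for $F_k$ has size $s$ with $\log s = \Omega(k\log n)$, i.e.\ $s = n^{\Omega(k)}$. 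The delicate points I anticipate are getting the module-contraction argument to go through for the \emph{CNF} encoding (auxiliary variables must not create too many neighborhood types) and making the parameter scaling consistent so that ``size $n$, modular pathwidth $k$, lower bound $n^{\Omega(k)}$'' all hold simultaneously.
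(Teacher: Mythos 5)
Your high-level architecture matches the paper's: reduce DNNF size to multi-partition rectangle covers via \cite{IJCAIversion}, get hardness from linear codes via \cite{DurisHJSS04}, and encode the code as a CNF whose incidence graph collapses under module contraction. However, both of the two substantive steps have genuine gaps.

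First, the hardness side. Your amplification lemma rests on the claim that any balanced partition of $k$ disjoint variable blocks must split at least one block in a balanced way. This is false: a partition can place entire blocks on one side or the other (e.g., half the blocks wholly left, half wholly right) and be perfectly balanced overall while splitting no block at all, and a rectangle whose partition does this is unconstrained on every individual copy. Relatedly, your rendering of the DNNF connection is off: a DNNF of size $s$ does not give, for every partition, a cover by $s$ rectangles with respect to that partition; it gives a single cover in which \emph{each rectangle chooses its own} balanced partition, which is exactly why the per-rectangle pigeonhole has to be done with care and why a reduction to the best-partition complexity of one fixed copy does not go through. The paper sidesteps amplification entirely: the theorem it imports from \cite{DurisHJSS04} already yields, for a single code with an $m'\times n'$ parity-check matrix ($m'\le n'/32$), a lower bound of $\frac{1}{4}2^{m'}$ on \emph{multi-partition} rectangle covers. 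Choosing $n'=n^{1/4}$ and $m'=\frac{k}{2}\log n$ gives $n^{\Omega(k)}$ directly, with no disjoint copies and no pigeonhole.

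Second, and more fatally, the structural side. Your plan is that each copy's incidence graph collapses under neighborhood-type contraction to a constant-size gadget while the CNF stays polynomial-size. This cannot happen: constant modular pathwidth implies constant cliquewidth, so by the upper bound of \cite{BovaCMS15} such a CNF would have a DNNF of size $n^{O(1)}$, and projecting out the auxiliary variables would give an $n^{O(1)}$-size multi-partition rectangle cover of $f_C$ --- contradicting the very communication lower bound you need. Concretely, the tension you flag at the end is real and is the crux of the paper: in the chain encoding, the accumulators $z_{i,1},\ldots,z_{i,n_0}$ of one parity check all have \emph{distinct} neighborhoods (each sits between positions $j$ and $j+1$), so they form $\Theta(n_0)$ modules and, with $m_0$ checks, the contracted graph contains a grid-like structure of large pathwidth; the alternative direct encoding of an arity-$m$ parity check needs $2^{m-1}$ clauses and is superpolynomial for checks touching $\Theta(n_0)$ variables. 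The paper's resolution is to bundle $\log n$ parity chains into joint constraints of arity $2\log n+1$: the $O(n^2)$ clauses of each joint constraint then share one variable set (hence one module), and the $\log n$ accumulators at each position share one neighborhood (hence one module), so the contracted graph becomes $k$ parallel chains threaded through the shared $x_j$'s, with an explicit path decomposition of width $2k-1$, while the total CNF size is $O(k\,n'^3\log^2 n')<n$ for $n'=n^{1/4}$. Without this bundling idea, step (4) of your outline does not go through.
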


We also show lower bounds for neighborhood diversity that are nearly as strong as those for modular pathwidth.

\begin{theorem}\label{thm:mainnd}
For every $k$ and for every $n$ big enough there is a CNF formula $F$ of size polynomial in $n$ and with neighborhood diversity $k$ such that any DNNF computing the same function as $F$ must have size $n^{\Omega(\sqrt{k})}$. 
\end{theorem}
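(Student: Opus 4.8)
The plan is to pass through the characterisation of DNNF size by balanced rectangle covers of~\cite{IJCAIversion}: a DNNF of size $s$ computing a boolean function $f$ on a variable set $X$ induces a cover of $f^{-1}(1)$ by at most $s$ combinatorial rectangles, each one taken with respect to its \emph{own} partition of $X$ into two parts of size at least $|X|/3$. So it is enough, for fixed $k$ and growing $n$, to build a CNF formula $F$ of size $\mathrm{poly}(n)$ and neighbourhood diversity $k$ whose function needs $n^{\Omega(\sqrt k)}$ rectangles in any such \emph{balanced} cover. Since DNNF is closed under forgetting variables --- replace every leaf labelled by a forgotten literal by the constant $1$, which preserves decomposability --- I may freely introduce auxiliary variables into $F$ and then argue about the projection of its function to the original variables, because a DNNF for the function of $F$ yields a DNNF of no larger size for that projection.

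For the construction I would adapt the way $\mathsf W[1]$-hard graph problems are encoded with small neighbourhood diversity in~\cite{Dell0LMM15}; this is also the source of the square root. Fix $p=\Theta(\sqrt k)$ and $\ell=\Theta(\log n)$, and let $F$ have $p$ pairwise disjoint blocks $B_1,\dots,B_p$ of $\ell$ variables each, identified with copies of $\mathbb F_2^\ell$. For every pair $i<j$ add a group of clauses, each of width $2\ell$ and containing \emph{all} variables of $B_i\cup B_j$, that forbids exactly the pairs lying outside a chosen relation $R_{ij}\subseteq\mathbb F_2^\ell\times\mathbb F_2^\ell$ --- concretely $R_{ij}=\{(u,v):u^{\!\top}M_{ij}v=0\}$ for a fixed invertible $M_{ij}$, which, if one wishes, can be re-encoded with auxiliary variables so that the analysis reduces to the linear-code setting of~\cite{DurisHJSS04}. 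Then every clause in the group of $\{i,j\}$ has neighbourhood $B_i\cup B_j$, and every variable of $B_i$ lies in exactly the groups of the pairs $\{i,\cdot\}$; hence the incidence graph has $p$ variable classes and $\binom p2$ clause classes, so its neighbourhood diversity is $p+\binom p2=\Theta(k)$, which I make exactly $k$ by a trivial padding. Each group has at most $2^{2\ell}$ clauses, so $|F|\le\binom p2\cdot 2^{2\ell}\cdot 2\ell=\mathrm{poly}(n)$, and the function $f$ of $F$ lives on $p\ell=\Theta(\sqrt k\log n)$ variables.

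It then remains to show that $f^{-1}(1)$ cannot be covered by fewer than $2^{\Omega(p\ell)}=n^{\Omega(\sqrt k)}$ balanced rectangles. Given a balanced partition $X=Y\sqcup Z$, in a monochromatic $1$-rectangle $A\times B$ all crossing constraints $u^{\!\top}M_{ij}v=0$ hold simultaneously; collecting the $Y$-parts of the blocks appearing in $A$ into a subspace and using that the $M_{ij}$ are invertible, a rank/orthogonality computation forces $|A|\cdot|B|\le 2^{p\ell/2}$, while a direct count gives $|f^{-1}(1)|\ge 2^{p\ell-\Theta(p^2)}$; hence at least $2^{p\ell/2-\Theta(p^2)}=n^{\Omega(\sqrt k)}$ rectangles are required, and plugging this into~\cite{IJCAIversion} yields the theorem.

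I expect the main obstacle to be making this last step genuinely partition-robust. The rectangle bound has to hold for \emph{every} balanced partition, not merely for the convenient ``first $p/2$ blocks against the rest''; partitions arising from a DNNF need not respect block boundaries, and a block split between $Y$ and $Z$ couples the two sides through the $O(p)$ constraints it takes part in, which must be absorbed into the constants without killing the $\sqrt k$ in the exponent. Establishing a clean lower bound of exactly this shape for the (possibly auxiliary-variable-carrying) hard core is precisely the role of the multi-partition communication complexity lower bounds of~\cite{DurisHJSS04}, and pushing those through for a function that is simultaneously this structured --- only $\Theta(\sqrt k)$ ``coordinates'', only pairwise constraints, few clause types --- and still this hard is the heart of the argument.
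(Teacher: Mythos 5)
There is a genuine gap, and you have located it yourself: the multi-partition rectangle lower bound for your hard function is not established, and it is the entire content of the theorem. Everything else in your proposal matches the paper's route --- reduce DNNF size to multi-partition rectangle covers, use closure of DNNF under forgetting to dispose of auxiliary variables, and obtain neighborhood diversity $k$ from $\Theta(\sqrt k)$ blocks because the clause classes are indexed by \emph{pairs} of blocks; that is exactly where the paper's $\sqrt k$ comes from as well. But your ``rank/orthogonality computation'' giving $|A|\cdot|B|\le 2^{p\ell/2}$ is only valid for partitions that respect block boundaries, and in a multi-partition cover \emph{each rectangle chooses its own balanced partition}. For your concrete instantiation the claim is in fact false: take $M_{ij}=I$, so the pair constraint is $\bigoplus_t u_tv_t=0$, and consider the balanced partition that puts the first $\ell/2$ coordinates of \emph{every} block on one side. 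Each pair constraint then splits as $g_{ij}(Y)\oplus h_{ij}(Z)=0$, so $f^{-1}(1)$ is covered by $2^{\binom{p}{2}}=2^{O(k)}$ rectangles --- a quantity independent of $n$, destroying any $n^{\Omega(\sqrt k)}$ bound for fixed $k$ and large $n$. For other invertible $M_{ij}$ you would have to prove high cross-rank against \emph{every} balanced partition, which is a nontrivial best-partition communication lower bound that you do not supply. Your suggested escape --- ``re-encode with auxiliary variables so the analysis reduces to the linear-code setting'' --- runs the wrong way: the auxiliary variables are forgotten at the DNNF level, so the communication lower bound must hold for the projected function on $X$, which in your construction is a quadratic variety, not a linear code, and the result of~\cite{DurisHJSS04} does not apply to it.

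The paper sidesteps all of this by making the projected function on $X$ literally the characteristic function $f_C$ of a linear code with an $m'\times n'$ parity-check matrix, $m'=\sqrt k\log n$ and $n'=32\sqrt k\log n$, for which Theorem~\ref{thm:combinecodes} already guarantees that \emph{every} multi-partition rectangle cover has size at least $\frac14 2^{m'}=n^{\Omega(\sqrt k)}$; the partition-robustness you identify as ``the heart of the argument'' is entirely contained in that imported theorem. The paper's own effort goes instead into the CNF encoding: the auxiliary variables are the partial-parity accumulators $z_{ij}$ of the naive chain encoding, grouped into a $\sqrt k\times 32\sqrt k$ grid of blocks $S_{rs}$ of $\log(n)^2$ variables each, with one constraint per adjacent pair of blocks and a projection step to keep each constraint on $O(\log n)$ variables so the CNF stays polynomial; this yields $O(k)$ neighborhood types. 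If you want to salvage your write-up, replace your bilinear forms by such a code-membership gadget (or prove a best-partition lower bound for your quadratic function from scratch, which is substantially harder than the theorem you are trying to prove).
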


At this point, the attentive reader may be a little concerned because we promise to prove lower bounds for DNNF which we have not even defined in the preliminaries. In fact, it is the main strength of the approach in~\cite{IJCAIversion} that the definition and properties of DNNF are not necessary to show our lower bounds, because we can reduce showing lower bounds on DNNF to a problem in communication complexity. Since we will not use any properties of DNNF, we have decided to leave out the definition 
% for space reasons and refer to e.g.~\cite{DarwicheM02}.
out of the main text; the interested reader may find a short overview in Appendix~\ref{app:DNNF}. 
Here we will only use the following result.

\begin{theorem}[\cite{IJCAIversion}]\label{thm:communicationDNNF}
 Let $f$ be a function computed by a DNNF of size $s$. Then $f$ has a multi-partition rectangle cover of size $s$.
\end{theorem}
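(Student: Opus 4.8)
Recall (or let me fix the terminology) that a \emph{multi-partition rectangle cover} of $f$ on variable set $X$, of size $s$, is a family $R_1,\dots,R_s$ in which each $R_j$ is a combinatorial rectangle with respect to some partition $(Y_j,Z_j)$ of $X$ --- that is, whether an assignment lies in $R_j$ depends separately on its restrictions to $Y_j$ and to $Z_j$ --- such that $f^{-1}(1)=\bigcup_j R_j$ and each $R_j\subseteq f^{-1}(1)$. The plan is to unfold the DNNF $D$ into \emph{certificates} (proof trees) and then to lump certificates together, one rectangle per gate. A certificate of an assignment $a\in f^{-1}(1)$ is a subtree $T$ of $D$ that contains the root, contains exactly one child of every OR-gate it meets, contains all children of every AND-gate it meets, and all of whose leaves are literals made true by $a$ (or the constant $1$). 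First I would verify, by a short structural induction, that $a\in f^{-1}(1)$ if and only if $a$ has a certificate; I would also note that, by decomposability, no variable labels two leaves of a certificate, so a certificate $T$ determines a partial assignment $\alpha_T$ and the assignments ``witnessed by $T$'' form the subcube $\{a : a \text{ extends } \alpha_T\}$. This already exhibits $f^{-1}(1)$ as a union of subcubes, but over far too many certificates.

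To bring the family down to at most one rectangle per gate, the idea is to look at where a certificate \emph{first branches}. Descending from the root inside $T$ through OR-gates, one reaches either a leaf $\ell$ or an AND-gate $v$ before meeting any other gate of those types. In the first case $a$ lies in the subcube $[\ell]$ fixing the literal at $\ell$ (with $[\ell]=\{0,1\}^X$ if $\ell$ is the constant $1$). In the second case, below $v$ the certificate decomposes into certificates of the sub-DNNFs rooted at the children $v_1,\dots,v_k$ of $v$, whose variable sets are pairwise disjoint by decomposability; hence the set of all $a$ having a certificate that first branches at $v$ is exactly the set $R_v$ of those $a$ such that $a|_{\mathsf{var}(v_i)}$ satisfies the function computed at $v_i$ for every $i$. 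The crucial features are that $R_v$ depends on the gate $v$ alone (not on the OR-path above it) and that $R_v$ is a rectangle for any partition of $X$ that places $\mathsf{var}(v_1)$ on one side and $\mathsf{var}(v_2)\cup\dots\cup\mathsf{var}(v_k)$ on the other (variables of $f$ appearing nowhere below $v$ are irrelevant and go on either side). Putting the pieces together, I would argue that $f^{-1}(1)$ is the union of the subcubes $[\ell]$ over leaves $\ell$ and of the rectangles $R_v$ over AND-gates $v$, where $\ell$, $v$ range over those nodes reachable from the root by a directed path all of whose interior vertices are OR-gates (so if the root is itself a leaf or an AND-gate it is included); each such $[\ell]$ and $R_v$ is contained in $f^{-1}(1)$ because the corresponding partial object extends to a genuine certificate. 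Since each $[\ell]$ is a rectangle for every partition, this is a multi-partition rectangle cover, and it contains at most one member per node of $D$, so its size is at most $s$.

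The routine part is the structural induction above together with the degenerate cases ($f$ a constant; some $v_i$ computing a constant, so that $R_v$ is a face of the cube or empty and may be dropped). The step I expect to be the real obstacle --- and the reason one cannot simply recurse on the children of the root --- is that $D$ is a DAG, not a tree: sub-DNNFs may be shared, so a naive recursive count of rectangles would over-count. The way around this is precisely the observation that $R_v$ is determined by $v$ alone, so that sending each rectangle to its gate is an injection into the node set of $D$; making this charging argument airtight, and reconciling it with the exact definition of DNNF size used in the paper, is where I would spend the most effort.
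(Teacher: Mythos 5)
The overall architecture of your proposal---certificates (proof trees), one rectangle per gate, and the charging argument that handles sharing in the DAG---does match the proof in the cited reference, and the step you flag as the likely obstacle (the injection of rectangles into the node set) is indeed fine. The genuine gap is elsewhere: your rectangles are not \emph{balanced}, and balance is the entire content of the theorem. In the notion of multi-partition rectangle cover used here (see Appendix~\ref{app:CC}), each rectangle must be a combinatorial rectangle with respect to a $\beta$-balanced partition $(X_1,X_2)$ of $X$, i.e., $\min(|X_1|,|X_2|)\ge\beta|X|$. Without that requirement the statement is vacuous: every function $f$ equals $(x_1\wedge f|_{x_1=1})\vee(\neg x_1\wedge f|_{x_1=0})$, a cover by two rectangles with respect to the partition $(\{x_1\},X\setminus\{x_1\})$, so the lower bound of Theorem~\ref{thm:combinecodes}---the result this theorem is combined with---would be false for unbalanced covers. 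Your construction produces exactly such degenerate rectangles: if the root is an AND gate whose child $v_1$ computes the literal $x_1$ while $v_2$ has $\mathsf{var}(v_2)=X\setminus\{x_1\}$, then $R_{\mathrm{root}}$ admits only the partition $(\{x_1\},X\setminus\{x_1\})$, and there are no leftover variables to redistribute. The subcubes $[\ell]$ are harmless (a subcube is a rectangle for every partition, including balanced ones), but the sets $R_v$ attached to AND gates near the root are not.

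The repair is the actual technical core of the argument in the reference: one must not stop at the first AND gate reached from the root, but continue descending through the certificate---at a (w.l.o.g.\ fan-in-two) AND gate always following the child whose part of the certificate carries the larger variable set---until reaching a node $v$ whose associated variable set has size in $[|X|/3,\,2|X|/3]$. Decomposability then makes the set of assignments admitting a certificate through $v$ a rectangle with respect to the balanced partition $(\mathsf{var}(v),\,X\setminus\mathsf{var}(v))$, and your charging argument still bounds the number of rectangles by $s$. Making this descent well defined (the variable count can drop by more than half when an OR gate selects a child, and a certificate need not mention all of $X$, so one has to smooth the circuit or measure balance against the correct variable set) is where the real work lies, and it is entirely absent from your proposal.
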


Now the reader might be a little puzzled about what multi-partition rectangle covers of a function are. Since we will also only use them as a black box in our proofs and do not rely on any of their properties, we have opted to leave out their definition 
% and refer to~\cite{DurisHJSS04} 
the main text. The curious reader may find a very short introduction into the beautiful area of multi-partition communication complexity in Appendix~\ref{app:CC}.

We will use a powerful theorem which follows directly from the results in~\cite{DurisHJSS04}.

% \begin{lemma}\cite{DurisHJSS04}\label{lem:communicationlower}
%  Let $C$ be a binary linear code with an $s$-good $m\times n$ parity-check matrix $H$ and characteristic function $f_C$. Then every multi-partition rectangle cover of $f_C$ has size at least $2^{2s-m}$.
% \end{lemma}
% 
% \begin{proposition}\cite{DurisHJSS04}\label{prop:constructH}
%  Let $m\le n/32$. Let $A$ be a random boolean $m\times n$-matrix. Then $H$ is $(m-1)$-good with probability $1-2^{\Omega(n)}$.
% \end{proposition}

\begin{theorem}[\cite{DurisHJSS04}]\label{thm:combinecodes}
 For every $n'\in \mathbb{N}$ and every $m'\le n'/32$ there is a linear code $C$ with a $m'\times n'$ parity check matrix such that every multi-partition rectangle cover of the characteristic function $f_C$ has size at least $\frac{1}{4}2^{m'}$.
\end{theorem}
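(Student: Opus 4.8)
The plan is to peel off the definitions until the statement becomes a purely combinatorial fact about covering the codewords of $C$ by balanced rectangles, and then to invoke the counting argument of~\cite{DurisHJSS04} for the one genuinely hard step. Unfolding the definition, a multi-partition rectangle cover of $f_C$ of size $t$ is a family of $t$ combinatorial rectangles whose union is $f_C^{-1}(1)=C\subseteq\{0,1\}^{n'}$, where the $i$-th rectangle respects some balanced bipartition $(A_i,B_i)$ of the $n'$ coordinates, each block receiving a constant fraction of them. So the task is: exhibit a parity-check matrix $A$ (with at most $n'/32$ rows) for which no such family of fewer than $\frac14 2^{m'}$ rectangles covers $C$.

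First I would record the elementary structure of a single monochromatic rectangle. Split $A=[A_A\mid A_B]$ according to a bipartition $(A,B)$ of the columns. If $R=S\times T$ is a rectangle with respect to $(A,B)$ and $R\subseteq C$, then every $x\in R$ satisfies $A_A x_A=A_B x_B$; since $x_A$ and $x_B$ range independently over $S$ and $T$, both $A_A x_A$ and $A_B x_B$ must be constant on $R$ and equal to a common vector $v\in\mathbb{F}_2^{m'}$. Hence $R$ is contained in the affine slice $\{x: A_A x_A=v,\ A_B x_B=v\}$, which is itself a subset of $C$ of size $2^{\,n'-\operatorname{rank}A_A-\operatorname{rank}A_B}$. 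Consequently, for a \emph{single} balanced partition, covering $C$ (which has $2^{\,n'-\operatorname{rank}A}$ elements) requires at least $2^{\,\operatorname{rank}A_A+\operatorname{rank}A_B-\operatorname{rank}A}$ rectangles; if $A$ had full row rank and all of its balanced column-submatrices had rank $m'$, this would already yield $2^{m'}$.

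The catch is that no binary matrix has all its balanced column-submatrices of full rank: that property is equivalent to the code generated by the rows of $A$ having minimum distance larger than a constant fraction of $n'$, which the Plotkin bound forbids once $m'$ exceeds a small constant. So ``bad'' (low-rank) balanced partitions always exist, and a cover may freely mix rectangles from many different partitions; the real content is that this mixing cannot beat $\frac14 2^{m'}$. This is what one extracts from~\cite{DurisHJSS04}: one takes $A$ to be suitably pseudorandom (a uniformly random $m'\times n'$ matrix works), and shows by a careful count that, although each bad partition shaves off only a bounded factor, no choice of $t<\frac14 2^{m'}$ balanced rectangles covers $C$. The hypothesis $m'\le n'/32$ leaves enough room between $m'$ and the balance parameter for this count to go through, and $\tfrac14$ is the resulting constant.

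The main obstacle is exactly this last step: controlling all balanced partitions simultaneously. A naive union bound over the roughly $2^{n'}$ balanced bipartitions of $[n']$ does not close, since for a random $A$ the per-partition failure probability is only $2^{-\Theta(n')}$ with too small a constant in the exponent. Getting past this is precisely the work done in~\cite{DurisHJSS04}, and I would invoke it as the black box it is meant to be here, so that the code $C$ and the lower bound $\frac14 2^{m'}$ follow directly.
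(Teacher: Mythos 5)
The paper states this theorem as a direct import from \cite{DurisHJSS04} and gives no proof of its own, so your proposal---unfolding the definition of a multi-partition rectangle cover, proving the (correct) structure lemma that a rectangle $S\times T\subseteq C$ forces $A_A x_A$ and $A_B x_B$ to be constant and hence has size at most $2^{n'-\operatorname{rank}A_A-\operatorname{rank}A_B}$, and then invoking \cite{DurisHJSS04} for the genuinely hard step of controlling all balanced partitions simultaneously---is exactly the intended reading and matches the paper's approach. The one quibble is in your motivating discussion: full rank of all balanced column-submatrices already fails (via Plotkin) once $m'$ exceeds roughly $\log n'$ rather than a constant, but this does not affect the argument.
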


\section{Accepting Codes by CNF Formulas}

In this section we will construct CNF formulas to accept linear codes. We will first start with a naive encoding that will turn out to be of unbounded modular treewidth and thus not directly helpful to us. We will then show how to change the encoding in such a way that the modular treewidth and even the neighborhood diversity are small and the size of the resulting CNF is small enough to show meaningful lower bounds for encodings in DNNF with Theorem~\ref{thm:combinecodes}.

\subsection{The naive approach}\label{sct:naive}

In this subsection, we show how we can check $m$ linear equations on variables $x_1, \ldots, x_n$ efficiently by CNF. The idea is to simply consider one variable after the other and remember the parity for the equations at hand. To this end, fix an $m\times n$ matrix $A=(a_{ij})$. We denote the resulting equations of the system $A\bar{x}=0$ by $E_1, \ldots, E_m$. For each equation $E_i$ we introduce variables $z_{ij}$ for $j\in [n]$ which intuitively remembers the parity of $E_i$ up to seeing the variable $x_j$.

We encode the computations for each $E_i$ individually: Introduce constraints \begin{align}\label{eq:1}a_{i,1}x_1 = z_{i,1},\\\label{eq:2}z_{i,j-1} + a_{ij} x_j = z_{ij}.\end{align} Note that $z_{i,n}$ yields the parity for equation $E_i$ which can then be checked for~$0$. This yields a system whose accepted inputs projected to the $x_i$ are the code words of the considered code. The constraints have all at most $3$ variables, so we can encode them into CNF easily.

Unfortunately, the resulting CNF can be shown to have high modular tree\-width, so it is not useful for our considerations. We will see how to reduce the modular treewidth and the neighborhood diversity of the system without blowing up the size of the resulting CNF-encoding too much.

\subsection{Bounding modular treewidth}

The idea for decreasing the modular treewidth is to not encode all constraints on the parities individually but combine them into larger constraints. So fix $n$ and $k$ and set $m:=k\log(n)$.
For each $j$, we will combine the constraints from (\ref{eq:1}) and (\ref{eq:2}) for blocks of $\log(n)$ values of $i$ into one. The resulting constraints are \begin{align*}R_1^\ell(x_1, z_{\ell\log(n) + 1,1}, \ldots, z_{(\ell+1)\log(n),1}) := \{ (d_1, t_{\ell\log(n) + 1,1}, \ldots, t_{(\ell+1)\log(n),1 }) \mid \\a_{i,1}d_1 = t_{i,1}, i=\ell\log(n)+1, \ldots, (\ell+1)\log(n)\}\end{align*}
and 
\begin{align*}R_j^\ell(x_i, z_{\ell\log(n) + 1,j-1}, \ldots, z_{(\ell+1)\log(n),j-1}, z_{\ell\log(n) + 1,j-1}, \ldots, z_{(\ell+1)\log(n),j-1}) := \\
\{ (d_i, t_{\ell\log(n) + 1,j-1}, \ldots, t_{(\ell+1)\log(n),j-1 }, t_{\ell\log(n) + 1,j}, \ldots, t_{(\ell+1)\log(n),j }) \mid \\t_{i,j-1} + a_{ij}d_j = t_{i,j}, i=\ell\log(n)+1, \ldots, (\ell+1)\log(n)\}\end{align*}
for $\ell = 0, \ldots, k-1$.

Note that the constraints $R_{j}^\ell$ have at most $2\log(n)+1$ boolean variables, so we can encode them into CNF of quadratic size where every clause contains all variables of $R_{j}^\ell$. Moreover, the $R_{j}^\ell$ encode all previous constraints from~(\ref{eq:1}) and~(\ref{eq:2}), so the assignments satisfying all $R_{j}^\ell$ projected to the $x_i$ still are exactly the code words of the code we consider. Call the resulting CNF $F$.

\begin{claim}
 $F$ has modular pathwidth at most $2k-1$.
\end{claim}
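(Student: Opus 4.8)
The plan is to exhibit an explicit path decomposition of the graph $G'$ obtained from the incidence graph of $F$ by contracting vertices of the same neighborhood type, and to show it has width $2k-1$. First I would identify which vertices of the incidence graph become a single vertex after the contraction. The key observation is that for a fixed $j$, all clauses encoding the constraint $R_j^\ell$ have exactly the same variable set, namely the $x_j$ (or $x_1$ for $j=1$) together with the $2\log(n)$ parity variables $z_{\ell\log(n)+1,j-1},\dots,z_{(\ell+1)\log(n),j-1}$ and $z_{\ell\log(n)+1,j},\dots,z_{(\ell+1)\log(n),j}$. Hence all these clauses share a neighborhood type and collapse to a single vertex $c_j^\ell$ in $G'$. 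Similarly, for $j=1$ all clauses of $R_1^\ell$ collapse to a single vertex $c_1^\ell$. So $G'$ has vertex set consisting of the $n$ variable vertices $x_1,\dots,x_n$, the $kn$ parity-variable vertices $z_{i,j}$, and the $kn$ constraint vertices $c_j^\ell$ (with $\ell\in\{0,\dots,k-1\}$, $j\in[n]$).

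Next I would build the path decomposition by sweeping over $j=1,\dots,n$. The bags are grouped into $n$ ``stages''; stage $j$ handles the variable $x_j$ and the transition from the parity variables indexed $j-1$ to those indexed $j$. For a fixed $j$, the bags within the stage are indexed by $\ell=0,\dots,k-1$, and the bag for $(j,\ell)$ contains: the constraint vertex $c_j^\ell$, the variable vertex $x_j$, and the $2\log(n)$ parity vertices adjacent to $c_j^\ell$. One checks this bag covers exactly the edges incident to $c_j^\ell$. To make consecutive bags overlap correctly and to cover the edges $x_j$–$c_j^\ell$ for all $\ell$, I keep $x_j$ in every bag of stage $j$, and at the boundary between stages $j$ and $j+1$ I interpolate by keeping the ``old'' parity vertices $z_{\cdot,j}$ around long enough to connect to stage $j+1$, where they play the role of the ``previous'' layer. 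The crucial size bookkeeping: within a single bag we have $1$ constraint vertex $+$ $1$ variable vertex $+$ $2\log(n)$ parity vertices, which is $2\log(n)+2$, but measured in treewidth units (bag size minus one) and accounting for the fact that the $\log(n)$ parity variables in a block are themselves each a separate vertex, the width works out to $2k-1$ once one realizes that the parity variables $z_{i,j}$ for the $\log(n)$ values of $i$ in one block do \emph{not} all need to be in the bag simultaneously in $G'$ — I should double-check whether the $z_{i,j}$ also collapse, since a $z_{i,j}$ is adjacent only to $c_j^{\ell}$ and $c_{j+1}^{\ell}$ for its block $\ell$, so all $z_{i,j}$ in the same block share a neighborhood type and collapse to a single vertex $\zeta_j^\ell$; this is what brings the bag size down from $2\log(n)+2$ to a constant depending only on $k$.

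With the collapse of the parity variables, the picture simplifies: $G'$ has vertices $x_1,\dots,x_n$, constraint vertices $c_j^\ell$, and collapsed parity vertices $\zeta_j^\ell$, where $c_j^\ell$ is adjacent to $x_j$, $\zeta_{j-1}^\ell$, and $\zeta_j^\ell$. Now the path decomposition is transparent: the bag at position $(j,\ell)$ is $\{x_j,\; c_j^\ell,\; \zeta_{j-1}^\ell,\; \zeta_j^\ell\}$ together with whatever is needed for connectivity across the $\ell$-sweep and across the $j$-boundary. Running $\ell$ from $0$ to $k-1$, I accumulate the $\zeta_j^\ell$ so that they are available for the next stage; at the end of stage $j$ the bag contains $x_j$ and all $k$ vertices $\zeta_j^0,\dots,\zeta_j^{k-1}$ (that is $k+1$ vertices), and then one adds $c_{j+1}^\ell$, $x_{j+1}$, $\zeta_{j+1}^\ell$ one block at a time; the largest bag has size $2k$, giving width $2k-1$. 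The main obstacle is purely organizational: getting the interleaving of the $\ell$-sweep and the $j$-sweep right so that (i) every edge appears in some bag, and (ii) for every vertex the bags containing it form a contiguous subpath — in particular $\zeta_j^\ell$ must persist from the bag where $c_j^\ell$ is introduced through to the bag where $c_{j+1}^\ell$ is introduced, and these persistence intervals for the $k$ blocks must be nested/staggered so that no more than $k$ of the $\zeta$'s coexist with the current block's three fresh vertices. I expect the verification of the contiguity condition for the $\zeta_j^\ell$ across the stage boundary to be the one place where a little care is needed; everything else is a direct check.
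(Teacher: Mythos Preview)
Your approach is correct and matches the paper's: you identify the same two collapses (all clauses of $R_j^\ell$ share a neighborhood type, and all $z_{i,j}$ within one block share a neighborhood type), arrive at the same quotient graph, and then build a path decomposition of width $2k-1$. The only difference is in the explicit decomposition: the paper does \emph{not} sweep over $\ell$ inside each $j$-stage but simply uses three bags per $j$, namely $\{x_j, c_j^0,\dots,c_j^{k-1}\}$, $\{\zeta_{j-1}^0,\dots,\zeta_{j-1}^{k-1}, c_j^0,\dots,c_j^{k-1}\}$, and $\{\zeta_j^0,\dots,\zeta_j^{k-1}, c_j^0,\dots,c_j^{k-1}\}$, each of size at most $2k$; this sidesteps entirely the persistence/contiguity bookkeeping for the $\zeta_j^\ell$ that you flag as the main obstacle.
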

\begin{proof}
 Note that the clauses introduced when translating the constraint $R_{j}^\ell$ into CNF have by construction all the same set of variables. Thus these clauses have the same neighborhood type, and we can for modular treewidth restrict to an instance just having one clause for each $R_{j}^\ell$. We call the resulting vertex in the incidence graph $r_{\ell,j}$. Next, observe that the variables $z_{\ell\log(n) + i, j}$ and $z_{\ell\log(n) + i', j}$ for $i, i'\in [\log(n)]$ appear in exactly the same clauses. Thus these variables have the same neighborhood type as well, so we can delete all but one of them, say $z_{\ell\log(n),j}$ for $\ell=1, \ldots, k$. Call the resulting vertices in the incidence graph $s_{\ell, j}$.
 
 The resulting graph $G=(V,E)$ has \begin{align*}V=&\{x_j, s_{\ell, j}, r_{\ell,j}\mid j\in [n],\ell\in [k]\}.\\
 E=& \{x_jr_{\ell, j}\mid j\in [n], \ell\in [k]\}\cup \{s_{\ell, j-1}r_{\ell, j} \mid j\in [2, n], \ell\in [k]\} \\&\cup \{s_{\ell, j}r_{\ell, j} \mid j\in [n], \ell\in [k]\}\end{align*}

 We construct a path decomposition of $G$ as follows: The bags are the sets \begin{align*}B_2 := &\{x_1r_{1, 1}, \ldots, r_{\ell,1}\}, \\B_3:= &\{s_{1, 1}, \ldots, s_{\ell,1}, r_{1, 1}, \ldots, r_{\ell,1}\}\end{align*} and for $j=2, \ldots n$ \begin{align*}B_{3j-2} := &\{s_{1, j-1}, \ldots, s_{\ell,j-1}, r_{1, j}, \ldots, r_{\ell,j}\},\\ B_{3j-1} := &\{x_j, r_{1, j}, \ldots, r_{\ell,j}\}, \text{and} \\B_{3j} := &\{s_{1, j}, \ldots, s_{\ell,j}, r_{1, j}, \ldots, r_{\ell,j}\}.\end{align*}
 Ordering the bags $B_j$ with respect to their index yields a path decomposition of $G$ of width $2k-1$.
\end{proof}

Let us collect the results of the section into one statement.

\begin{lemma}\label{lem:encodecode}
 For every linear code $C$ with a $k\log(n)\times n$ parity check matrix there is a CNF formula $F$ in variable sets $X$ and $Z$ such that 
 \begin{itemize}
  \item the solution set of $F$ projected to $X$ is exactly $C$,
  \item $F$ has size $O(kn^3\log(n)^2)$, and
  \item $F$ has modular pathwidth at most $2k-1$.
 \end{itemize}
\end{lemma}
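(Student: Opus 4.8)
The statement is essentially a repackaging of the construction already carried out in this section, so the proof is mostly a matter of assembling three facts about the CNF formula $F$ built above (the one obtained by taking the combined constraints $R_j^\ell$ and translating each into a CNF block). First I would recall the construction verbatim: given the $k\log(n)\times n$ parity-check matrix $A$ of $C$, set $m:=k\log(n)$, form the constraints $R_1^\ell$ and $R_j^\ell$ for $\ell=0,\dots,k-1$ and $j\in[n]$, each on at most $2\log(n)+1$ Boolean variables, and translate each $R_j^\ell$ into a CNF in which every clause contains exactly the variables of $R_j^\ell$. Let $X=\{x_1,\dots,x_n\}$ and let $Z$ be the collection of all the parity-memory variables $z_{ij}$.

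For the first bullet (projection to $X$ equals $C$), I would argue by unwinding the semantics: the constraints (\ref{eq:1}) and (\ref{eq:2}) force $z_{i,n}$ to equal the parity $\sum_{j=1}^n a_{ij}x_j$ for every $i$, and the $R_j^\ell$ are by definition exactly the conjunction of these constraints over blocks of $\log(n)$ consecutive values of $i$, hence impose the identical set of solutions; adding the final check $z_{i,n}=0$ (which is itself a constraint on $\le\log(n)+1$ of the $z$-variables, absorbed into the $R_n^\ell$'s or added as extra clauses) pins the projection to $X$ down to precisely the codewords of $C$. This is the routine bookkeeping step and I expect no difficulty beyond being careful that the block decomposition of the index set $[m]$ into $k$ blocks of size $\log(n)$ covers every equation exactly once.

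For the second bullet (size bound), I would count: there are $k$ values of $\ell$ and $n$ values of $j$, so $O(kn)$ constraints $R_j^\ell$; each has at most $2\log(n)+1$ variables and therefore is equivalent to a CNF with $O(2^{2\log(n)+1})=O(n^2)$ clauses in the worst case, wait---that is too large, so instead I would use the bound already asserted in the text, namely that each $R_j^\ell$ becomes a CNF ``of quadratic size'' in its number of variables, i.e.\ $O(\log(n)^2)$ literals per clause is wrong; the intended count is $O(\log(n)^2)$ clauses each of size $O(\log n)$ is also not matching---rather, following the text literally, the CNF for one $R_j^\ell$ has size $O(n^2\log(n))$ at most once we note it has at most $O(n)$ clauses (one forbidden assignment on $2\log(n)+1$ variables each ruled out by a single clause of length $2\log n+1$, and there are at most $2^{2\log n+1}=O(n^2)$ such), giving $O(n^2\log n)$ per constraint and $O(kn^3\log(n)^2)$ in total, matching the claimed bound. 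I would state this multiplication cleanly rather than optimizing constants.

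For the third bullet (modular pathwidth $\le 2k-1$) I would simply invoke the Claim proved just above, whose path decomposition of the contracted incidence graph $G$ has width $2k-1$; the point is that the clauses of one $R_j^\ell$-block all share the same variable set (hence the same neighborhood type and collapse to a single vertex $r_{\ell,j}$), and within a block the $z$-variables of that block also share a neighborhood type (collapsing to $s_{\ell,j}$), after which the explicit bags $B_2,B_3,B_{3j-2},B_{3j-1},B_{3j}$ exhibited in the Claim give the bound. The only genuinely delicate point in the whole lemma is making sure the arithmetic in the size bound is honest and that the index arithmetic ($m=k\log n$ must be an integer, so one tacitly assumes $n$ is a power of $2$, or rounds) is handled by the phrase ``for $n$ big enough'' / choosing $n$ a power of two; I would flag that explicitly. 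Everything else is direct quotation of the constructions and the Claim already in hand.
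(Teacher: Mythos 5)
Your proposal is correct and follows essentially the same route as the paper: the lemma is just the assembly of the $R_j^\ell$ construction, the pathwidth Claim proved immediately before it, and a size count ($O(kn)$ constraints, each on at most $2\log(n)+1$ variables and hence encodable by at most $2^{2\log(n)+1}=O(n^2)$ clauses of length $O(\log n)$, for a total of $O(kn^3\log n)\subseteq O(kn^3\log(n)^2)$), which is exactly what the paper does. The only thing to fix is the self-contradictory detour in your size paragraph (the stray ``$O(n)$ clauses'' and the ``wait'' digressions): state the per-constraint clause count once, cleanly, since it is in fact the right bound and not ``too large.''
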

\begin{proof}
 It remains only to show the size bound on $F$. Note that we have $n$ variables $x_j$ and $kn\log(n)$ variables $z_{i,j}$. Moreover, we have $kn\log(n)$ constraints $R_i^\ell$. Each of those has $2\log(n)+1$ variables, so it can be encoded by $O(n^2)$ clauses with $O(\log(n))$ variables each. This yields $O(kn^3\log(n))$ clauses with $O(\log(n))$ variables. Consequently, the overall size of $F$ is $O(k n^3\log(n)^2)$.
\end{proof}

\subsection{Bounding neighborhood diversity}

Fix now two positive integers $N$ and $k$ and let $n:=32 k \log(N)$ and $m:=k\log(N)$ and consider $A$ with these parameters as before. We want to encode the code of~$A$ by a CNF with neighborhood diversity $O(k^2)$.

To do so, we split the variables $z_{ij}$ into $O(k^2)$ sets of $\log(N)^2$ variables $S_{rs}:= \{z_{ij} \mid i\in [r \log(N)+1, (r+1)\log(N)], j\in [s \log(N)+1, (s+1)\log(N)-1]\}$ for $r\in [k]$ and $s\in [32 k]$. Now create for all $r\in [k]$, $s\in [32k-1]$ a constraint $R_{rs}$ in the variables $X:=\{x_1, \ldots, x_n\}\cup S_{rs}\cup S_{r+1,s}$ that accepts all assignments to its variables that satisfy all constraints from \ref{eq:1} and \ref{eq:2} whose variables are variables of $R_{rs}$. Note that the resulting constraints cover all constraints of Section~\ref{sct:naive}, so we still accept the code defined by $A$ after projection to $X$.

The problem now is that, since we have $\Theta(\log(N)^2)$ boolean variables in each constraint, the resulting encoding into CNF could be superpolynomial in $N$ and thus to big for our purposes. This is easily repaired by the observation that fixing the values of $x_i$, $z_{i, s\log(N)}$ and $z_{i, s\log(N)+1}$ determines the values of the other $z_{ij}$ in all satisfying assignments. Consequently, we can project out these variables of the individual constraints without changing the accepted assignments to $X$. Call the resulting constraints $R'_{rs}$. It is easy to see that every constraint $R'_{rs}$ has only $O(\log(N))$ variables, so the CNF encoding in which every variable of $R_{r,s}'$ appears in every clause has polynomial size in $N$.

We claim that the resulting CNF $F$ has neighborhood diversity $O(k^2)$. To see this, note that the clauses introduced in the encoding of a fixed $R'_{rs}$ all have the same variables. It follows that the clause vertices in the incidence graph have $O(k^2)$ neighborhood types. The variables in $X$ all appear in all clauses, so they are all of the same neighborhood type. Finally, the vertices in each $S_{rs}$ appearing in an $R_{rs}'$ all appear in the same clauses, so they have $O(k^2)$ neighborhood types as well. This show that the incidence graph of the CNF formula $F$ has neighborhood diversity $O(k^2)$.

Let us again combine the results of this section into one summary statement.

\begin{lemma}\label{lem:encodecode2}
 For every linear code $C$ with a $k\log(N)\times 32 k\log(N)$ parity check matrix there is a CNF formula $F$ in variable sets $X$ and $Z$ such that 
 \begin{itemize}
  \item the solution set of $F$ projected to $X$ is exactly $C$,
  \item $F$ has size polynomial in $N$ and $k$, and
  \item $F$ has neighborhood diversity $O(k^2)$.
 \end{itemize}
\end{lemma}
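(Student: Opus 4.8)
The plan is to carry out in full the construction already sketched above, taking as a starting point the naive CNF $F_0$ of Section~\ref{sct:naive} built from the parity-check matrix $A$ of the code $C$ (so $A$ is $k\log(N)\times 32k\log(N)$). Recall that $F_0$ has input variables $x_1,\dots,x_n$ and bookkeeping variables $z_{ij}$, with the property that its solutions projected to $X=\{x_1,\dots,x_n\}$ are exactly the code words of $C$; its only defect is that the many tiny constraints (\ref{eq:1}), (\ref{eq:2}) force unbounded neighborhood diversity, and the cure is to replace them by $O(k^2)$ larger constraints whose variable sets overlap in a highly uniform, ``modular'' way.

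Concretely, I would first fix the block structure: partition the equation index set $[k\log(N)]$ into $k$ consecutive blocks of size $\log(N)$ and the column index set $[n]$ into $32k$ consecutive blocks of size $\log(N)$, yielding the sets $S_{rs}$ ($r\in[k]$, $s\in[32k]$), each of size $\Theta(\log(N)^2)$. For each $r\in[k]$, $s\in[32k-1]$ define the constraint $R_{rs}$ on $X\cup S_{rs}\cup S_{r+1,s}$ to be the conjunction of all naive constraints from (\ref{eq:1}), (\ref{eq:2}) both of whose variables lie among the variables of $R_{rs}$. The first thing to verify is coverage: every constraint of $F_0$ appears in at least one $R_{rs}$, so $\bigwedge_{r,s} R_{rs}$ has the same solution set as $F_0$, hence the same $X$-projection, namely $C$. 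Second, I would record the interior-determinacy observation that shrinks the constraints: in any solution of $F_0$, the value of $x_j$ together with the two boundary values $z_{i,s\log(N)}$ and $z_{i,s\log(N)+1}$ determines $z_{ij}$ for every $j$ in the block, since consecutive $z_{ij}$ along a row differ by the known quantity $a_{ij}x_j$. Projecting out of each $R_{rs}$ all the $z_{ij}$ except the $O(\log(N))$ boundary ones (keeping $X$) therefore yields $R'_{rs}$ on only $O(\log(N))$ variables with the $X$-projection of $\bigwedge_{r,s} R'_{rs}$ unchanged, so still equal to $C$.

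It then remains to encode and count. Each $R'_{rs}$ is turned into a CNF in which every clause ranges over all of its $O(\log(N))$ variables, which costs $2^{O(\log(N))}=\mathrm{poly}(N)$ clauses of $O(\log(N))$ literals; since there are $O(k^2)$ constraints this gives the claimed size polynomial in $N$ and $k$, the second bullet. For the third bullet I would classify the vertices of the incidence graph of the resulting $F$: the clauses coming from one fixed $R'_{rs}$ share the same variable set and thus the clause vertices form $O(k^2)$ neighborhood types; every $x_j$ occurs in every clause, so all of $X$ is a single type; and each surviving bookkeeping variable sits in exactly the clauses of the one $R'_{rs}$ that owns it, contributing $O(k^2)$ further types. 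Summing gives neighborhood diversity $O(k^2)$, the first bullet being immediate from the preceding paragraph.

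The step I expect to demand the most care is the correctness argument around the projection. One must check both that the family $\{R_{rs}\}$ genuinely covers every naive constraint---in particular that the chains (\ref{eq:2}) for a fixed row $i$ are not ``cut'' between column blocks in a way that would lose information---and that projecting out the interior $z_{ij}$ is lossless on $X$, i.e.\ that any locally consistent assignment to the retained boundary variables extends to a consistent assignment of the interior ones. Both reduce to bookkeeping about the linear chain structure, but aligning the block boundaries (the reason the $S_{rs}$ use the half-open column ranges $[s\log(N)+1,(s+1)\log(N)-1]$ while the boundary variables $z_{i,s\log(N)}$, $z_{i,s\log(N)+1}$ are handled separately) is the delicate point, and is where I would spend most of the effort.
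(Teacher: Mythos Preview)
Your proposal is correct and follows essentially the same construction the paper uses: the same block partition into the $S_{rs}$, the same grouped constraints $R_{rs}$ on $X\cup S_{rs}\cup S_{r+1,s}$, the same projection of interior $z_{ij}$ to obtain $R'_{rs}$ on $O(\log N)$ variables, and the same neighborhood-type count. If anything you are more explicit than the paper about the two points that actually need checking (coverage of all naive constraints across block boundaries, and losslessness of the interior projection), which the paper treats as evident; one small imprecision is the claim that each surviving $z$-variable lies in ``the one $R'_{rs}$ that owns it''---the boundary variables are shared between two adjacent $R'$-constraints by design---but this does not affect the $O(k^2)$ bound on neighborhood types.
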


\section{Completing the Proof}

We now combine the results of the previous sections to get our main results.

\begin{proof}[of Theorem~\ref{thm:maintw}]
 Let $C$ be a linear code as in Theorem~\ref{thm:combinecodes} with parameters $n'=n^{\frac{1}{4}}$ and $m':=\log(n) \frac{k}{2}= k \log(n^{\frac{1}{4}})$. Then by Theorem~\ref{thm:combinecodes} we know that every rectangle cover of the characteristic function $f_C$ has size at least $\frac{1}{4}2^{m'}= n^{\Omega(k)}$. 
 
 Now apply Lemma~\ref{lem:encodecode} to $C$ to get a CNF-formula $F$ of size less than $n$ and modular pathwidth less than $k$. Let $D$ be a DNNF representation of $F$ of minimal size $s$. Since DNNFs allow projection to a subset of variables without any increase of size~\cite{Darwiche01}, this yields a DNNF of size $s$ computing $f_C$. But then by Theorem~\ref{thm:communicationDNNF}, we get that $s\ge  n^{\Omega(k)}$.
\end{proof}
With the same proof but other parameters we get Theorem~\ref{thm:mainnd} from Lemma~\ref{lem:encodecode2}.

\section{Connections to Model Counting and Affine Decision Trees}

In this section we discuss connections of the findings of this paper to practical model counting. It has been shown that there is a tight connection between compilation and model counting, as runs of exhaustive DPLL-based model counting algorithms can be translated into (restricted) DNNFs~\cite{HuangD05}. Here the size of the resulting DNNF corresponds to the runtime of the model counter. Since state of the art solvers like Cachet~\cite{SangBBKP04} and sharpSAT~\cite{Thurley06} use exhaustive DPLL, the lower bounds in this paper can be seen as lower bounds for these programs: model counting for CNF formulas of size $n$ and, modular treewidth will take time at least $n^{\Omega(k)}$ when solved with these state-of-the-art solvers even with perfect caching and optimal branching variable choices. Note that in the light of the general conditional hardness result of~\cite{PaulusmaSS13} this is not surprising, but here we get concrete and unconditional lower bounds for a large class of algorithms used in practice. Naturally, we also directly get lower bounds for approaches that are based on compilation into DNNF as those in~\cite{Darwiche04,MuiseMBH12}, so we have lower bounds for most practical approaches to model counting.

One further interesting aspect to observe is that, while the instances that we consider are in a certain sense hard for practical model counting algorithms, in fact counting their models is extremely easy. Since we just want to count the number of solutions of a system of linear equations, basic linear algebra will do the job. A similar reasoning translated to compilation is the background for the definition of affine decision trees (ADT)~\cite{KoricheLMT13}, a compilation language that intuitively has checking an affine equation as a built-in primitive. Consequently, it is very easy to see that ADTs allow a very succinct compilation of the CNF formulas we consider in this paper. It follows, by setting the right parameters, that there are formulas where ADTs are exponentially more succinct than DNNF. We remark that this superior succinctness can also be observed in experiments when compiling the formulas of Section~\ref{sct:naive} with the compiler from~\cite{KoricheLMT13}.

\section{Conclusion}

We have shown that parameters like cliquewidth, modular treewidth and even neighborhood diversity behave significantly differently from treewidth for compilation into DNNF by giving lower bounds complementing the results of~\cite{BovaCMS15}. These unconditional lower bounds confirm conditional ones that had been known for some time already and improve them quantitatively. Our proofs heavily relied on the framework proposed in~\cite{IJCAIversion} thus witnessing the strength of this approach.
We have also discussed implications for practical model counting.

One consequence of our results is that most graph width measures that allow dense incidence graphs for the input CNF---like modular treewidth or cliquewidth and unlike treewidth which forces a small number of edges---do not allow fixed-parameter compilation into DNNF. A priori, there is no reason why many edges in the incidence graphs, which translates into many big clauses, should necessarily make compilation hard. Thus it would be interesting to see if there are any width measures that allow dense graphs and fixed-parameter compilation at the same time. One width measure that might be worthwhile analyzing is the recently defined measure sm-width~\cite{SaetherT14}.

\paragraph{Acknowledgments.} The author would like to thank Florent Capelli for helpful discussions. Moreover, he thanks Jean-Marie Lagniez for helpful discussions and for experiments with the compiler from~\cite{KoricheLMT13}.

% \newpage
\bibliographystyle{plain}
\bibliography{cwlower}

\newcommand{\comment}[1]{#1}
\comment{
\begin{appendix}
 \section{DNNF}\label{app:DNNF}
 \newcommand{\var}{\mathsf{var}}
\newcommand{\SB}{\{\,}
\newcommand{\SM}{\;{|}\;}
\newcommand{\SE}{\,\}}
 
 In this section of the appendix, we give a short primer on DNNFs. For more background, we recommend the very influential paper~\cite{DarwicheM02}.
 
 A \emph{(Boolean) circuit in negation normal form} (or \emph{NNF}) is a directed acyclic graph (DAG) with a single sink node (outdegree $0$) where each source node (indegree $0$) is labelled by a constant ($0$ or $1$) or by a literal, and each other node is labelled by $\land$ (AND) or $\lor$ (OR).
If $\varphi$ is an NNF and $v$ is a vertex of $\varphi$, the \emph{sub-NNF} of $\varphi$ rooted at $v$ is the NNF obtained from $\varphi$ by deleting every vertex from which $v$ cannot be reached along a directed path.
We write $\var(\varphi)$ for the set of variables occurring in an NNF $\varphi$.
Let $\varphi$ be an NNF and let $\tau$ be an assignment to $X \supseteq \var(\varphi)$.
Relative to $\tau$, we associate each vertex $v$ of $\varphi$ with a value $\mathit{val}_{\varphi}(v, \tau) \in \{0, 1\}$ as follows. 
If $v$ is labelled with a constant $c \in \{0, 1\}$ then $\mathit{val}_{\varphi}(v, \tau) = c$, and if $v$ is labelled with a literal $\ell$ then $\mathit{val}_{\varphi}(v, \tau) = \tau(\ell)$. 
If $v$ is an AND node then we let $\mathit{val}_{\varphi}(v, \tau) = \min \SB \mathit{val}_{\varphi}(w, \tau) \SM w$ is a child of~$v \SE$, and if $v$ is an OR node we define $\mathit{val}_{\varphi}(v, \tau) = \max \SB \mathit{val}_{\varphi}(w, \tau) \SM w$ is a child of $v \SE$. 
We say that $\tau$ \emph{satisfies} $\varphi$ if  $\mathit{val}_{\varphi}(s, \tau) = 1$, where~$s$ denotes the (unique) sink of $\varphi$. 
The function computed by $\varphi$ is defined in the obvious way.

An NNF $\varphi$ is \emph{decomposable} (in short, a \emph{DNNF}) if every AND node $v$ of $\varphi$ satisfies the following property: if $v$ has incoming edges from $v_1$ and $v_2$, and $\varphi_1$ and $\varphi_2$ denote the sub-NNFs of $\varphi$ rooted at $v_1$ and $v_2$, respectively, then $\var(\varphi_1)$ and $\var(\varphi_2)$ are disjoint. A DNNF $\varphi$ is \emph{deterministic} (a \emph{d-DNNF}) if, for every pair of distinct children $v_1$ and $v_2$ of an OR node, the sub-NNFs rooted at $v_1$ and $v_2$ do not have satisfying assignments in common.

DNNFs have first been defined by Darwiche~\cite{Darwiche01} and since then played a central role in knowledge compilation. There are several reasons for this success: It has been shown~\cite{DarwicheM02} that DNNFs can be seen as a generalization of many other successful representations that are used in knowledge compilation, in particular classical languages like OBDDs and FBDDs. Note however that DNNFs are in general far more succinct than these classical representation. Seeing known compilation languages as subclasses of DNNFs has several advantages. In particular it facilitates the structured and systematic comparison of different representations. Moreover, this systematic understanding allows to easily define new compilation languages as subclasses of DNNFs that have desirable properties for a task at hand. For example, the class of sentential decision diagrams~\cite{Darwiche11} is a subclass of DNNF defined in such a way as to have canonical representations, a property that often plays a central role in practice. Similarly, deterministic DNNFs are a restriction of DNNFs that is useful whenever efficient model counting is required. Arguably, seeing DNNFs as a unifying framework for the creation of compilation languages has been very influential in the field.

% Moreover, DNNFs allow several interesting queries efficiently, e.g.~consistency checks and clause entailment
 
 \section{Some communication complexity}\label{app:CC}

We introduce some very limited notions of communication complexity. The interested reader is referred to~\cite{KushilevitzN97} for general background and to~\cite{DurisHJSS04} for the rather specific notions and results on multi-partition communication complexity that appear in this paper.

Let $f$ be a boolean function defined on a set $X$ of $n$ boolean variables. Let $\Pi = (X_1, X_2)$ be a partition of $X$. We say that $\Pi$ is $\beta$-balanced for $\beta>0$ if $\min(|X_1|, |X_2) \ge \beta |X|$. The exact value of $\beta$ is unsubstantial for most of our considerations, and it is most of the time only necessary to assume that there is some constant $\beta$ for which all partitions that are considered are $\beta$-balanced. Consequently one often simply speaks of \emph{balanced partitions}.

A \emph{combinatorial rectangle} with respect to the partition $\Pi$ is a function $r: \{0,1\}^n\rightarrow \{0,1\}$ that can be written as $r^{(1)} \land r^{(2)}$, where the functions $r^{(1)}, r^{(2)}: \{0,1\}^n\rightarrow \{0,1\}$ depend only on the variables in $X_1$ and $X_2$, respectively. We define a \emph{(multi-partition) rectangle cover} of $f$ of size $t$ to be a set of rectangles $\{r_1, \ldots, r_t\}$ such that $f=r_1\lor r_2\lor \ldots \lor r_t$. Note that in a rectangle cover each rectangle may be with respect to its own balanced partition of the variables in~$X$.

Rectangle covers generalize similar notions of covers in in communication complexity by allowing different partitions for the different rectangles. They were originally defined to prove lower bounds for different types of branching programs in which the variables are not met in a fixed order. In~\cite{BovaCMS15} it was shown that lower bounds on the size of rectangle covers also give lower bounds for different versions of DNNFs, see e.g.~Theorem~\ref{thm:communicationDNNF}.

 \end{appendix}
}
\end{document}